%% file: main.tex
\RequirePackage{fix-cm}
\documentclass[12pt]{article}
\PassOptionsToPackage{table,dvipsnames}{xcolor}
\input{commands}

\usepackage[T1]{fontenc}
\usepackage[utf8]{inputenc}
\usepackage{amsmath,amssymb}
\usepackage{graphicx}
\usepackage{booktabs,array,tabularx,multirow,longtable}
\usepackage{threeparttable}
\usepackage{tabularx}
\usepackage{array}
\usepackage{tabularx}
\usepackage{array}
\newcolumntype{Y}{>{\centering\arraybackslash}X}
\usepackage{hhline}
\usepackage[table]{xcolor}
\usepackage{pifont}
\usepackage{algorithm,algorithmic}
\usepackage{wrapfig}
\usepackage{placeins}
\IfFileExists{microtype.sty}{\usepackage{microtype}}{}
\makeatletter
\@ifpackageloaded{natbib}{}{\usepackage[authoryear,round]{natbib}}
\makeatother
\usepackage{url}
\usepackage{hyperref}
\hypersetup{hidelinks}

\definecolor{gemrow}{RGB}{247,246,252}
\definecolor{groupgray}{RGB}{243,243,243}
\definecolor{groupbar}{RGB}{242,243,250}
\definecolor{gainGreen}{RGB}{0,120,70}
\definecolor{lossRed}{RGB}{190,45,45}
\newcommand{\gooddelta}[1]{\textcolor{gainGreen}{#1}}
\newcommand{\baddelta}[1]{\textcolor{lossRed}{#1}}

\newcommand{\gem}{\textsc{GEM}}
\title{Stable Geometry with Divergent Task Evidence for Efficient Long-Horizon Agent Compression}
\input{authors}
\date{}

\newcommand{\gemmethodcite}[1]{{\scriptsize\citep{#1}}}
\DeclareUnicodeCharacter{221E}{\ensuremath{\infty}}

\hypersetup{colorlinks=true,linkcolor=mydarkblue,citecolor=mydarkblue,urlcolor=mydarkblue}
\usepackage{team-template}
\renewcommand{\TeamPaperID}{STABLE GEOMETRY / LONG-HORIZON AGENTS}
\renewcommand{\TeamShortTitle}{Stable Geometry}
\begin{document}
\pagestyle{fancy}
\maketitle
\thispagestyle{first}

\begin{abstract}
Long horizon agents accumulate growing interaction histories that increase
context and inference costs. We find that geometric redundancy alone is an
insufficient criterion for safe compression. Although agent histories exhibit
strong low dimensional structure, similar global geometry can preserve very
different amounts of task evidence. At identical retained block counts,
evidence aware selection raises next action Top 3 retention from $0.31$ to
$0.69$, while centroid similarity remains $0.98$. Controlled replacement
further shows that action related information can be substantially altered
while global geometric measures remain nearly unchanged. Motivated by this
gap between geometry and evidence, we introduce
\textbf{Geometry Guided Evidence Preserving Memory (\gem{})}, a training free
compressor that protects task and execution evidence before using geometric
residuals to complete coverage. \gem{} reduces mean combined token usage from
$2.69$M to $2.11$M per task, a $21.4\%$ reduction, while maintaining comparable
task reward. Our results show that efficient agent history compression should
optimize for preserved task evidence rather than geometric coverage alone.
\end{abstract}

\section{Introduction}

Language-model agents continuously accumulate task instructions, plans, tool
calls, environment observations, errors, and intermediate artifacts throughout
long-horizon execution
\citep{yao2022react,liu2024agentbench,kang2025acon,hu2026sam}.
As the trajectory grows, repeatedly forwarding the full interaction history
increases context and inference costs
\citep{jiang2023llmlingua,jiang2024longllmlingua,kang2025acon}. Existing systems therefore summarize,
edit, retrieve, or selectively retain historical information
\citep{pan2024llmlingua,wang2026swe,li2026self,li2026sculptor}.
However, compression is not only a question of how much history can be removed.
A more fundamental question is which historical records can be discarded
without removing information that remains necessary for subsequent execution. This question is particularly difficult when redundant-looking records encode
different stages of task execution across prolonged sequences of interaction.

A natural approach is to preserve the dominant geometry of the history
representation. Prior studies show that contextual representations often
exhibit anisotropy and concentration along dominant directions
\citep{ethayarajh2019contextual,mu2017all,gao2021simcse}.
If most historical variation lies in a low-dimensional
subspace, a small subset of interaction blocks may appear sufficient to cover
the trajectory. Yet geometric redundancy does not imply execution
redundancy. A plan to modify a file, the command that performs the modification,
and a later partial read may occupy similar semantic directions while
establishing different facts about the task state. A historical block may
therefore contribute little new geometric variation while still carrying
information required for the next action. We refer to this mismatch as the gap between geometry and evidence.

This distinction makes history compression inherently evidence sensitive.
Recent agent-memory studies likewise emphasize that task state, execution
dependencies, and persistent evidence can remain important across long
interaction horizons
\citep{chen2026beyond,zhou2026mem1,wu2026remember,hu2026sam}.
Removing a geometrically redundant block is harmless only when its execution
role has also become redundant. In contrast, removing a low-novelty record
that contains a goal constraint, state transition, unresolved error, or
source-bearing observation may eliminate information that later actions still
depend on. Global representational coverage alone is therefore insufficient
for deciding which blocks are safe to remove. Compression should first
identify evidence that remains consequential for execution and only then use
geometric redundancy to decide which remaining records can be omitted. This suggests that compression should jointly account for representational
coverage and the execution role of retained records.

Motivated by these observations, we introduce
\textbf{Geometry-Guided Evidence-Preserving Memory (\gem{})}, a training-free
external compressor for long agent histories. \gem{} first protects task and
execution evidence, including recent context, goal-related observations,
execution-state records, fresh errors, and source-bearing information. It then
completes the retained set using geometric residuals so that uncovered
representation directions are preserved without allowing low geometric
novelty to override protected evidence. Retained interaction blocks remain
unchanged and are reconstructed in chronological order before being forwarded
to the acting model through its standard request interface.
The main contributions are as follows:

{
\setlength{\leftmargini}{1.2em}
\setlength{\itemsep}{3pt}

\begin{itemize}

    \item \textbf{Geometry and evidence gap in agent histories.}
    We distinguish global representational coverage from execution-relevant
    evidence retention and show that similar geometric coverage can correspond
    to substantially different preservation of next-action information.
    Controlled replacement further isolates low-energy evidence whose removal
    changes action-related information while leaving global geometry nearly
    unchanged.

    \item \textbf{Evidence-first geometric compression.}
We introduce a two-stage selection principle that protects consequential
task and execution evidence before applying geometric residual completion.
This prevents geometric redundancy from being treated as sufficient
evidence that a historical interaction is safely removable during long-horizon execution.

    \item \textbf{Execution-aware online history management.}
    We operate on complete interaction blocks, preserve call-result structure,
    and maintain original message content and chronological order. Online
    forwarding constraints limit each compression update while allowing newly
    activated evidence to remain available during subsequent execution.

\end{itemize}
}

\section{Related Work}
\label{sec:related}

\subsection{Context Management for Long Horizon Agents}

Methods for controlling growing agent histories differ in what they remove
and how they decide to remove it. ReSum periodically summarizes interaction
history, ACON improves compression guidelines using failure feedback, and PACE
adjusts historical detail according to predicted relevance to the next action
\citep{kang2025acon,wei2026pace}.
SelfCompact and Self-GC give agents context management operations, while
Sculptor supports summarizing, hiding, restoring, and searching historical
information \citep{li2026self,hao2026self,li2026sculptor}.
Other recent approaches learn what to retain, decouple context management from
the acting model, or expose agentic context-management mechanisms
\citep{jahan2026learning,zhang2026comem,li2026acm}. 
\gem{} instead selects complete original interaction blocks. It neither
produces a replacement summary nor requires the acting agent to learn new
memory operations.

\subsection{Memory and Execution Evidence}

Memory systems preserve information beyond the active context
\citep{packer2023memgpt,park2023generative,shinn2023reflexion}. MEM1 jointly
learns reasoning and memory consolidation, SAM combines compact memory cues
with access to raw history, and AdaCoM learns an external context manager for
a frozen agent \citep{zhou2026mem1,hu2026sam}.
MAGE organizes history around execution dependencies and errors, and Proactive
Memory Agent emphasizes that information present in a transcript may still
fail to influence later behavior \citep{chen2026beyond,wu2026remember}.
These studies motivate explicit treatment of task state. Our focus is more
specific: when original records share semantic content, their execution roles
may still differ. \gem{} protects such evidence before completing coverage
of the remaining history.

\subsection{Task-Aware Context Selection}

Prompt compression methods such as LLMLingua-2 learn which tokens to retain,
while SWE-Pruner and SWE-Pruner Pro study selective retention of tool context
\citep{pan2024llmlingua,wang2026swe}.
Dynamic token pruning and long-context evaluation further study which parts of
a long context remain useful under fixed model interfaces
\citep{fu2024lazyllm,hsieh2024ruler,zhang2024bench}. Retrieval systems likewise
explore hierarchical or adaptive evidence selection
\citep{sarthi2024raptor,jeong2024adaptive,asai2024self}.
Work on context interference further shows that retaining relevant evidence
is not sufficient when surrounding information impairs its use.
We study selection at the level of complete interaction blocks and distinguish
coverage of history representations from information related to the next
action. The selector is training-free and uses only the current task and available history
without modifying the acting model.

\section{Geometry and Task Information in Agent Histories}
\label{sec:geometry-task}

We analyze history representations to distinguish redundancy from information
needed for continued execution. The recorded next action is used only for
offline measurement and is never available to the deployed compressor.

\subsection{History Representation}
\label{sec:agent-history}

A removable interaction block contains an assistant tool-call message and all
of its corresponding tool results. Calls and results are retained or removed
together. The system and task prefix is kept separately, and messages that
cannot be safely assigned to complete blocks are not deleted.

Prior analyses of representation spaces characterize anisotropy, dominant
directions, and rank-based concentration in contextual and sentence embeddings
across modern neural representation models
\citep{ethayarajh2019contextual,mu2017all,gao2021simcse}.

Let $H_t=\{o_1,\ldots,o_n\}$ denote the available complete blocks. A frozen
encoder maps each block to a unit vector $x_i\in\mathbb R^d$. With
$X=[x_1,\ldots,x_n]^\top$ and mean representation $\bar x$, we study centered
variation through the following centered matrix decomposition
\begin{equation}
X_c=X-\mathbf 1\bar x^\top=U\Sigma V^\top,
\qquad \Sigma=\operatorname{diag}(\sigma_1,\ldots,\sigma_r).
\label{eq:history-svd}
\end{equation}

The singular values are ordered from largest to smallest. Writing
$p_j=\sigma_j^2/\sum_k\sigma_k^2$, we summarize concentration by
using two complementary spectral statistics
\begin{equation}
r_{\mathrm{eff}}(X_c)=\exp\Bigl(-\sum_j p_j\log p_j\Bigr),
\qquad
r_\rho(X_c)=\min\Bigl\{k:\sum_{j\le k}p_j\ge\rho\Bigr\}.
\label{eq:history-dimension}
\end{equation}

We also report $\bar r_\rho=r_\rho/n$. The spectral analysis sets $\rho$ to
$0.90$. Low effective dimension indicates concentration of variation, not
that every block outside a dominant representation can be safely removed
during subsequent downstream execution steps.

\subsection{Coverage and Action-Related Retention}
\label{sec:retained-information}

Let $S\subseteq[n]$ index a retained subset, $\bar x_S$ be its mean, and $Q_S$
contain an orthonormal basis for its span. Overall geometric coverage is
measured by
\begin{equation}
C(S)=\frac{\bar x_S^\top\bar x}{\|\bar x_S\|_2\|\bar x\|_2},
\qquad
E(S)=\frac{1}{n}\sum_{i=1}^n\|Q_S^\top x_i\|_2^2.
\label{eq:retained-geometry}
\end{equation}
Centroid similarity $C(S)$ measures preservation of the mean direction.
Captured energy $E(S)$ measures average projection onto the retained span.
Unlike the centered spectral analysis, $E(S)$ uses the original normalized
block vectors, matching the online selector.

For the unit representation $a$ of the recorded next action, we separately
measure
\begin{equation}
P_a(S)=\|Q_S^\top a\|_2^2,
\qquad
R_a(S)=\frac{P_a(S)}{P_a([n])},
\qquad
T_3(S;a)=\frac{|S\cap\mathcal N_3(a)|}{3}.
\label{eq:task-support}
\end{equation}
Here $\mathcal N_3(a)$ contains the three history blocks most similar to $a$.
Projection measures retention of the action direction; Top-3 retention
measures retention of the corresponding original blocks. Neither is a task
success rate. Ratios are evaluated only when their denominators are defined.

\subsection{Information Outside the Dominant Subspace}
\label{sec:low-energy}

Let $V_{\mathrm{high}}=[v_1,\ldots,v_{r_\rho}]$ and
$V_{\mathrm{low}}=[v_{r_\rho+1},\ldots,v_r]$. The latter directions account
for at most a fraction $1-\rho$ of centered history variation. Their share of
the action projection is
\begin{equation}
m_{\mathrm{low}}(a)=
\frac{\|V_{\mathrm{low}}^\top a\|_2^2}{\|V^\top a\|_2^2}.
\label{eq:low-energy-action}
\end{equation}
A large value indicates that action-related information extends beyond the
leading history directions. It does not identify a removable block by itself.
For the controlled replacement analysis, we additionally measure the
low-energy fraction of each centered block and its alignment with the action
in this subspace. Appendix~\ref{app:directional-intervention} gives the
complete definitions.

\section{Evidence Beyond the Dominant Subspace}
\label{sec:geometry-task-results}

We test whether agent histories contain non-random representational redundancy,
whether matching global geometry is sufficient to retain task evidence, and
whether action-related information can be selectively removed from low-energy
directions.

\subsection{Histories Exhibit Structured Redundancy}
\label{sec:lowdim-result}

The first analysis contains $171$ trajectories and $4{,}701$ complete blocks.
Each block is encoded independently using mean layer-32 representations from a
frozen Qwen3.5-9B encoder, producing $4{,}096$-dimensional vectors.  A second
analysis uses the $1{,}024$-dimensional Qwen3-Embedding-8B representations used by
the online selector on $32$ trajectories and $1{,}162$ block representations.
These are isolated block encodings or external selector embeddings, not hidden
states of the acting model under its full history.

For each trajectory, we compare the real representation matrix with two matched
controls. Independent circular feature shifts preserve every feature's marginal
values but disrupt alignment across blocks; a Gaussian control matches matrix
shape. Figure~\ref{fig:lowdim} gives the visual comparison, while the exact
centered effective-rank and $r_{90}/n$ medians are moved to
Appendix~\ref{app:geometry}, Table~\ref{tab:lowdim-main}. Median effective ranks
are $8.39$, $16.99$, and $19.96$ for real, feature-shift, and Gaussian isolated
blocks, and $14.04$, $23.54$, and $30.07$ in selector space. All $171$
isolated-block comparisons and all $32$ selector-space comparisons place the real
history below its feature-shift control in effective rank. Across the four
primary feature-shift comparisons, the reported Holm-adjusted values satisfy
$p\le1.86\times10^{-9}$.

\begin{figure}[t]
\centering
\includegraphics[width=0.98\linewidth]{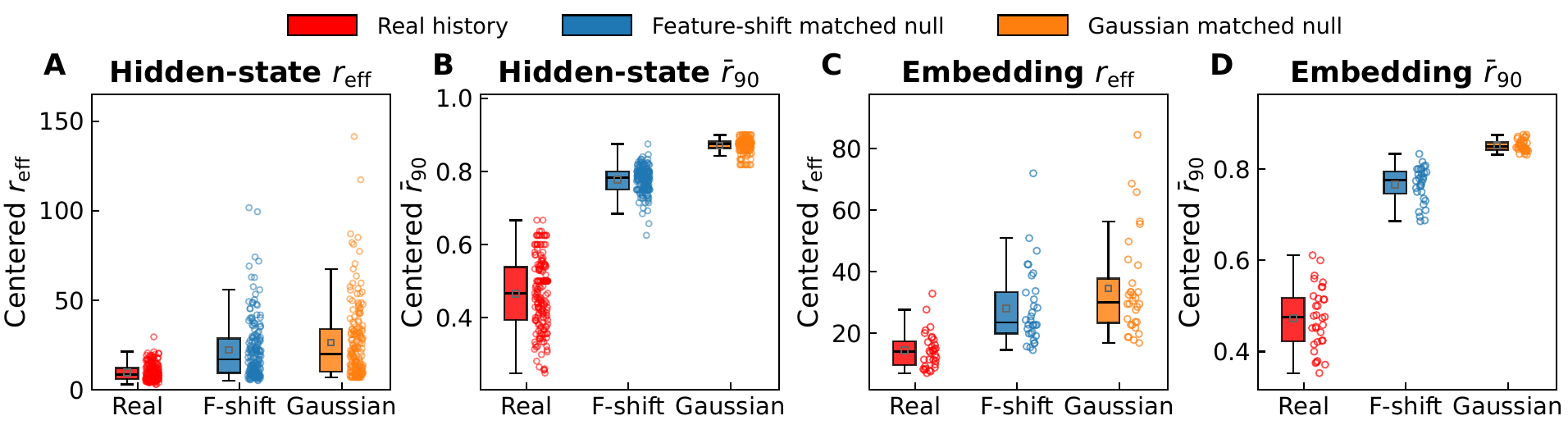}
\vspace{-10pt}
\caption{Structured redundancy in agent histories. Real histories are compared
with feature-shift and Gaussian controls. The two representation spaces
show the same concentration pattern.}
\label{fig:lowdim}
\end{figure}

The result establishes representational concentration, not safe removability.
Indeed, the uncentered greedy-coverage analysis uses a median of six isolated
blocks and ten selector blocks, corresponding to median retained fractions of
$0.30$ and $0.35$, respectively.  Those counts describe geometric coverage
only; they do not imply that the omitted blocks are task-irrelevant.

\subsection{Coverage Does Not Determine Evidence Retention}
\label{sec:matched-budget-result}

We next compare geometry-only and evidence-constrained selection at identical
retained block counts. The dense replay contains $650$ checkpoints from $32$
trajectories, starting after $16$ complete blocks. Individual trajectories
contribute between $2$ and $73$ checkpoints (median $15.5$), with $130$,
$108$, $161$, and $251$ checkpoints from Code, Office, Security, and Web,
respectively. The independent statistical units remain the $32$ trajectories.
At the same retained budget, evidence-constrained selection preserves
substantially more task information, particularly next-action Top-3 retention,
while global geometry remains nearly unchanged.
\Needspace{160pt}
\begin{wraptable}{r}{0.58\linewidth}
    \vspace{-2pt}
    \centering
    \caption{Matched-budget retention at 650 checkpoints.}
    \label{tab:matched-budget-main}
    \vspace{3pt}

    \footnotesize
    \setlength{\tabcolsep}{3.5pt}
    \renewcommand{\arraystretch}{1.08}

    \begin{tabularx}{\linewidth}{
    @{}
    >{\raggedright\arraybackslash}X
    |>{\centering\arraybackslash}p{0.20\linewidth}
    |>{\centering\arraybackslash}p{0.20\linewidth}
    |>{\centering\arraybackslash}p{0.20\linewidth}
    @{}
}
    \hline
    \textbf{Metric}
    & \textbf{Geometry}
    & \textbf{Evidence}
    & \textbf{$\Delta$} \\
    \hline

    Action projection
    & 0.91 & 0.94
    & \gooddelta{$\uparrow\,0.03$} \\

    Next-action Top-3
    & 0.31 & \textbf{0.69}
    & \gooddelta{$\uparrow\,0.38$} \\

    Final projection
    & 0.90 & 0.94
    & \gooddelta{$\uparrow\,0.04$} \\

    Goal projection
    & 0.91 & 0.94
    & \gooddelta{$\uparrow\,0.03$} \\
    \hline

    Centroid similarity
    & 0.98 & 0.98
    & \baddelta{$-$} \\
    \hline

    \end{tabularx}

    \vspace{0pt}
\end{wraptable}

Table~\ref{tab:matched-budget-main} reports the pooled same-budget means. At the
same retained count, next-action Top-3 retention rises from $0.31$ to $0.69$,
while centroid similarity is unchanged at two-decimal precision. Action, final-state, and
goal projections show the same pattern: evidence retention improves without a
comparably large shift in the global centroid measure. The metric definitions
and pooled-checkpoint aggregation are detailed in
Appendix~\ref{app:same-budget}.

The dense numbers are descriptive pooled checkpoint means, with trajectories
remaining the independent statistical units throughout the analysis. The small mean centroid change
alone does not establish statistical equivalence.

\begin{figure}[t]
\centering
\includegraphics[width=1.0\linewidth]{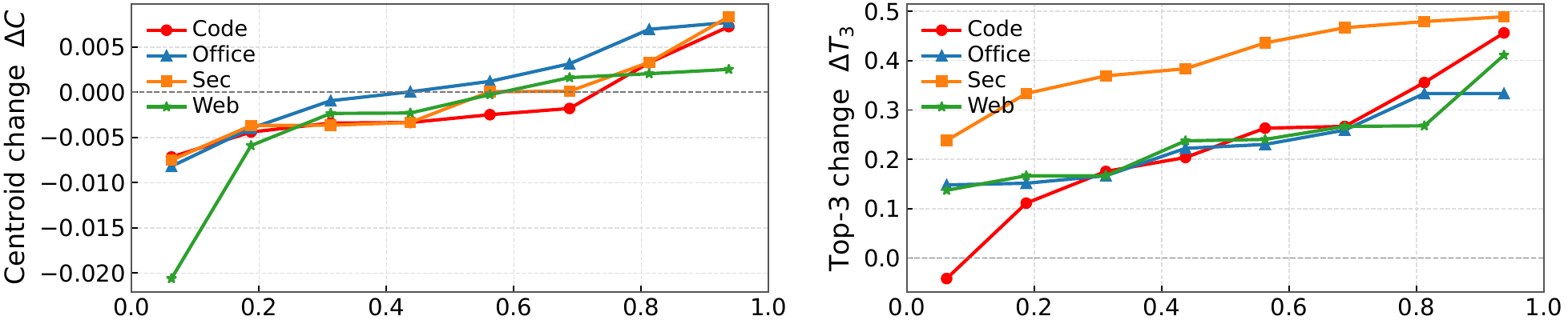}
\vspace{-20pt}
\caption{Matched-budget geometry and task information. The trajectory-level
visualization contrasts evidence-constrained and geometry-only selection at the
same retained block count.}
\label{fig:matched-dense}
\end{figure}

\subsection{Low-Energy Evidence and Controlled Replacement}
\label{sec:low-energy-result}

\begin{wrapfigure}{r}{0.49\linewidth}
    \vspace{-0.65\baselineskip}
    \centering
    \includegraphics[width=\linewidth]{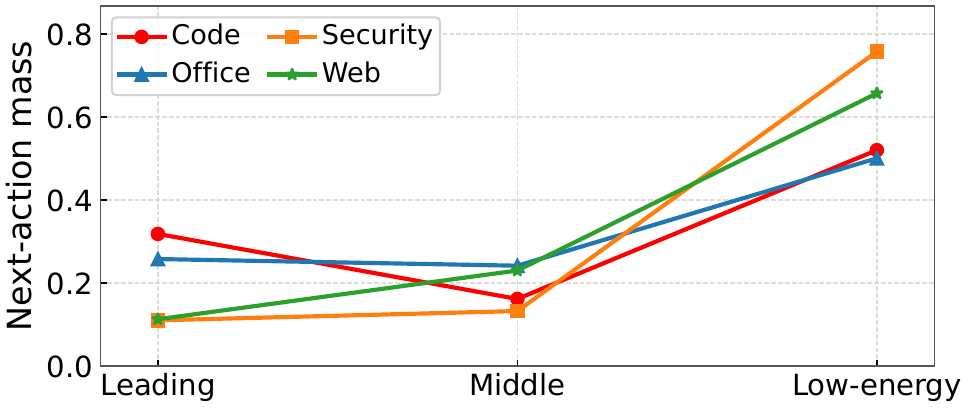}
    \vspace{-19pt}
    \caption{Next-action information across ordered spectral bands.}
    \label{fig:low-energy-spectrum}
    \vspace{0pt}
\end{wrapfigure}

Figure~\ref{fig:low-energy-spectrum} shows that next-action information is not
confined to the leading directions. The plotted bands are descriptive rank bands,
not equal-energy intervals; the formal intervention instead uses directions after
the $90\%$ centered-energy cutoff. A block can therefore be weakly represented by
the dominant variation of the trajectory while still carrying evidence relevant
to the next action.

This observation motivates a controlled single-block replacement. The targeted
condition removes a retained block whose low-energy component is strongly aligned
with the recorded next action. The matched condition removes a low-alignment block
while matching representation similarity, recency, and serialized length.

\Needspace{170pt}
\begin{wraptable}{r}{0.33\linewidth}
    \vspace{-0.45\baselineskip}
    \centering
    \vspace{-10pt}
    \caption{Targeted block replacement effects.}
    \label{tab:targeted-replacement}
\vspace{+5pt}
    \footnotesize
    \setlength{\tabcolsep}{3.0pt}
    \renewcommand{\arraystretch}{1.08}

    \begin{tabularx}{\linewidth}{
        @{}
        >{\raggedright\arraybackslash}X
        >{\centering\arraybackslash}p{0.23\linewidth}
        @{}
    }
    \hline

    \textbf{Metric} & $\boldsymbol{\Delta}$ \\
    \hline

    \rowcolor{groupgray}
    \multicolumn{2}{@{}l@{}}{\textit{Task information}} \\
    \hline

    Action projection   & \textbf{$-0.03$} \\
    Next-action Top-3   & \textbf{$-0.33$} \\
    \hline

    \rowcolor{groupgray}
    \multicolumn{2}{@{}l@{}}{\textit{Global geometry}} \\
    \hline

    Centroid similarity & $+0.00$ \\
    Captured energy     & $+0.00$ \\
    \hline

    \end{tabularx}

    \vspace{0pt}
\end{wraptable}

Table~\ref{tab:targeted-replacement} shows the same dissociation in intervention
form. Targeted replacement reduces action projection by $0.03$ and next-action
Top-3 retention by $0.33$, whereas centroid similarity changes by only $+0.00$
and captured energy by $+0.00$. The intervention quantities and matching
definition are given in Appendix~\ref{app:directional-intervention} for full reproducibility.

The targeted replacement therefore removes substantially more action-related
information without a comparable change in the global geometric measures. This is
a representation-level intervention rather than a rerun of the acting agent, so it
supports the distinction between geometry and evidence without by itself establishing a
causal task-reward effect.

Execution-aware protection distinguishes plans, state-changing operations, and
subsequent observations, even when they refer to the same artifact.
Appendix~\ref{app:evidence-protection} details the rules that identify and
preserve these different execution roles.
\section{\gem: Geometry-Guided Evidence-Preserving Memory}
\label{sec:method}

\gem{} is a training-free external compressor for complete interaction
histories. It separates records that should be protected from records selected
for additional geometric coverage. The encoder is frozen, and the acting
model is accessed through its ordinary request interface. No agent hidden
states, parameter updates, or generated summaries are required.

\begin{figure}[t]
\centering
\includegraphics[width=\linewidth]{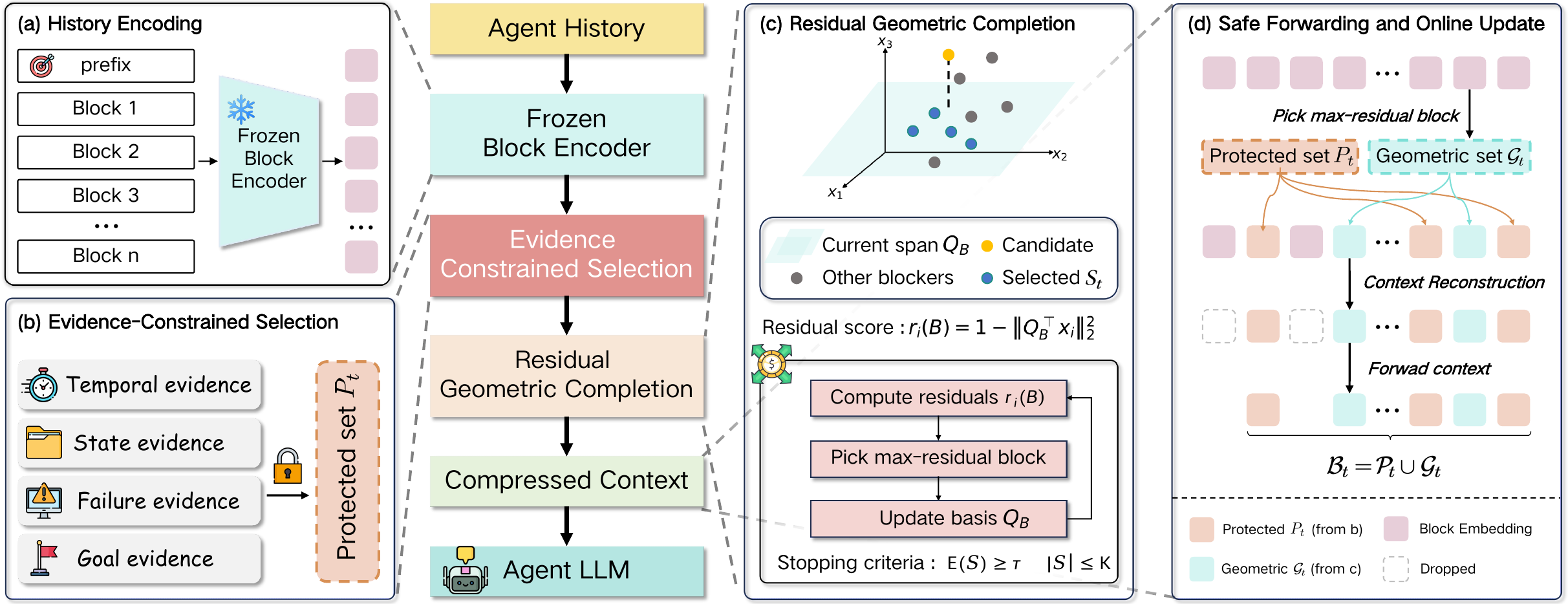}
\caption{Overview of \gem{}. (a) A frozen encoder represents complete
interaction blocks. (b) Evidence constraints form the protected set
$\mathcal P_t$. (c) Residual geometric completion adds blocks whose directions
are not covered by the current selected span. (d) Retained original blocks
are reconstructed in chronological order and forwarded to the unchanged
agent. The online forwarding constraints are described in
Section~\ref{sec:geometric-completion}.}
\label{fig:overview}
\end{figure}

Figure~\ref{fig:overview} summarizes the procedure. The protected set seeds
geometric completion; the two stages are not independent selectors.
Embeddings guide selection, while the original message records supply the
forwarded context. This separation keeps the selection mechanism lightweight while preserving
the original interaction semantics seen by the acting model.

\subsection{Evidence-Constrained Selection}
\label{sec:constrained-selection}

We distinguish the protected set $\mathcal P_t$, the selected core $B_t$, and
the final forwarded set $S_t$. The core contains the evidence and additional
directions identified by the selector. The forwarded set may be larger
because the online deletion guard retains additional original blocks. This distinction separates the semantic selection objective from the
request-level constraints that determine what is ultimately forwarded to the acting model.

For nominal core capacity $K$, let $K_t=\max\{K,|\mathcal P_t|\}$ so that
required evidence is not dropped when its count exceeds the nominal capacity.
The ideal core-selection problem is
\begin{equation}
B_t^\star=\operatorname*{arg\,min}_{B\subseteq[n]}|B|
\quad\mathrm{s.t.}\quad
\mathcal P_t\subseteq B,\quad E(B)\ge\tau,\quad |B|\le K_t.
\label{eq:constrained-compression}
\end{equation}
We use a greedy procedure rather than solve this problem exactly. It need
not find the smallest core, and the coverage target can remain unmet when
capacity or candidate directions are exhausted. Importantly, $K$ is not a
hard bound on the number of blocks forwarded to the agent.

\subsection{Protecting Task and Execution Evidence}
\label{sec:evidence-protection}

The protected set combines recent interactions, goal-related observations,
execution-state evidence, fresh errors, and source records
\citep{chen2026beyond,zhou2026mem1,wu2026remember,hu2026sam}. These groups are
combined and deduplicated before geometric completion. Recent blocks retain
the immediate working context. Goal selection keeps the eligible block most
similar to the current task. State selection uses the current state query,
file paths, and supported operation patterns to identify relevant records of
possible changes to task artifacts. Together, these signals prioritize records whose execution role may remain
important even when their semantic content is highly redundant.

A plan, a write, and a read of the same file can differ in execution value even
when they add little geometric novelty. We therefore inspect explicit editing
calls and supported Bash and Python write patterns, rather than relying only
on tool names. State candidates are organized by target path. Error evidence
is protected within a fixed age window, and source protection retains recent
records from distinct sources when source-like observations form a sufficient
fraction of the history.

These rules use only the available history and current goal. A detected write
is evidence of a possible mutation, not an independent verification that it
succeeded. Likewise, the expiration of an error record does not establish
that the error was resolved. The rules provide inexpensive protection cues,
not a complete symbolic model of the environment. Their recorded limits are
listed in Appendix~\ref{app:algorithm}.

\subsection{Geometric Completion and Online Updates}
\label{sec:geometric-completion}

Starting from $B=\mathcal P_t$, we construct an orthonormal basis $Q_B$ with
modified Gram-Schmidt orthogonalization. For each eligible unselected block,
we compute its residual and select the largest:
\begin{equation}
r_i(B)=1-\|Q_B^\top x_i\|_2^2,
\qquad i^\star=\operatorname*{arg\,max}_{i\notin B}r_i(B).
\label{eq:residual-selection}
\end{equation}
We add $i^\star$, update the basis, and continue until coverage, capacity, or
candidate exhaustion terminates selection. The residual therefore measures information not yet represented by the
current geometric span, rather than relevance to the task itself. Writing
$\mathcal G_t=B_t\setminus\mathcal P_t$ for the blocks added by geometric
completion gives $B_t=\mathcal P_t\cup\mathcal G_t$. A protected record cannot
be removed merely because it has a small residual. The default coverage target
is $\tau=0.90$, and the nominal core capacity is $K=16$.

The online implementation limits how much of a request can be removed in one
rewrite. If $L(S)$ is the serialized length of the complete request containing
blocks $S$, the forwarded set satisfies
\begin{equation}
B_t\subseteq S_t\subseteq[n],
\qquad L(S_t)\ge(1-\delta)L([n]).
\label{eq:forwarded-guard}
\end{equation}
We set $\delta=0.05$. Starting from the full safe request, the implementation
attempts to remove eligible blocks outside the core, prioritizing larger
redundant blocks while respecting this guard. Complete blocks are indivisible,
so an event may remove nothing. All retained messages remain unchanged and in
chronological order.

Requests with fewer than $16$ complete blocks are left unchanged. Global
selection occurs at the first eligible request, after a session or goal change,
or after $256$ newly added blocks. Between these events, the implementation
keeps the previously forwarded set and appends new blocks and newly activated
evidence. This incremental policy avoids repeatedly reconstructing the retained history
when no global reselection is required. Storing the actual forwarded set prevents a later update from
mistakenly removing blocks retained by the deletion guard. The per-request
character limit is distinct from cumulative token savings over a task.
Appendix~\ref{app:theory} gives the corresponding geometric derivations and
forwarding properties.

\section{Experiments}
\label{sec:experiments}

\subsection{Experimental Setup}
\label{sec:experimental-setup}
\begin{table}[h]
\centering
\caption{WorkBuddyBench Full260 results.
$S$ denotes mean reward $\times100$, and $T$ denotes combined tokens
per task (M). Higher $S$ and lower $T$ are better.}
\label{tab:main260}
\vspace{5pt}
\footnotesize
\setlength{\tabcolsep}{2.6pt}
\renewcommand{\arraystretch}{1.10}

\begin{tabularx}{\linewidth}{
    @{}
    >{\raggedright\arraybackslash}p{0.145\linewidth}
    | YY
    | YY
    | YY
    | YY
    | YY
    @{}
}

\hline

\multirow{2}{*}{\textbf{Method}}
& \multicolumn{2}{c|}{\textbf{Code}}
& \multicolumn{2}{c|}{\textbf{Office}}
& \multicolumn{2}{c|}{\textbf{Security}}
& \multicolumn{2}{c|}{\textbf{Web}}
& \multicolumn{2}{c}{\textbf{Overall}}
\\

\cline{2-11}

&
$S\uparrow$ & $T\downarrow$
&
$S\uparrow$ & $T\downarrow$
&
$S\uparrow$ & $T\downarrow$
&
$S\uparrow$ & $T\downarrow$
&
$S\uparrow$ & $T\downarrow$
\\

\hline

Base agent
& \textbf{76.99} & 1.37
& 81.84 & 1.20
& 47.76 & 5.98
& \textbf{72.14} & 2.43
& 69.87 & 2.69
\\

\rowcolor{gemrow}
\textbf{\gem{}}
& 75.25 & \textbf{1.35}
& \textbf{84.06} & \textbf{1.15}
& \textbf{49.56} & \textbf{3.95}
& \textbf{72.14} & \textbf{2.09}
& \textbf{70.18} & \textbf{2.11}
\\

\hline

\end{tabularx}

\vspace{2pt}
{\scriptsize
}

\end{table}

WorkBuddyBench Full260~\citep{team2026tencent} contains $260$ tasks across Code, Office, Security, and
Web. The main online evaluation uses DeepSeek-V4-Flash~\citep{xu2026deepseek} at temperature zero,
while \gem{} uses frozen $1{,}024$-dimensional Qwen3-Embedding-8B
representations. Task quality is reported as
\emph{Score}$=100\times$mean reward, and combined tokens include model input,
model output, and embedding tokens. Full260 is the primary end-to-end
evaluation, while the fixed-40 panel serves as a separate reference for common
compression controls. The same-ID \gem{} slice from the Full260 carrier is
marked separately in Table~\ref{tab:method-comparison}.
Appendix~\ref{app:fixed40-reference} describes the reference-panel protocol.

\subsection{Full260 End-to-End Results}
\label{sec:main-results}

Across all $260$ tasks, mean combined usage decreases from $2.69$M to $2.11$M
per task, a reduction of approximately $21.4\%$.  Mean score changes from
$69.87$ to $70.18$ (reward $\times100$).  The average hides
important domain variation.  Security shows the largest token decrease,
$5.98$M to $3.95$M per task, while its mean reward changes from $0.48$ to
$0.50$.  Office also increases in mean reward while using fewer tokens, and
Web keeps the same rounded reward while reducing tokens.  Code is the clearest
boundary case: tokens change only slightly ($1.37$M to $1.35$M) and mean reward
decreases by about $0.02$.  This pattern is consistent with code tasks being
more sensitive to intermediate execution and file-state evidence, but the domain
means alone do not identify the causal mechanism.

The token decomposition in Appendix~\ref{app:token-accounting} shows that the
net reduction comes primarily from a lower cached-input volume.  \gem{} also
introduces embedding usage and increases uncached input, so its savings are not
free; the cached-input reduction must exceed these added costs.
\begin{table}[t]
\centering
\caption{Fixed-40 context-management reference.
$S$ is mean reward $\times100$ and $T$ is combined tokens per task (M).
Each domain contains ten tasks.}
\label{tab:method-comparison}

\footnotesize
\setlength{\tabcolsep}{2.4pt}
\renewcommand{\arraystretch}{1.08}

\begin{tabularx}{\linewidth}{
@{}l|YY|YY|YY|YY|YY@{}
}
\hline

\multirow{2}{*}{\textbf{Method}}
& \multicolumn{2}{c|}{\textbf{Code}}
& \multicolumn{2}{c|}{\textbf{Office}}
& \multicolumn{2}{c|}{\textbf{Security}}
& \multicolumn{2}{c|}{\textbf{Web}}
& \multicolumn{2}{c}{\textbf{Overall}}
\\

\cline{2-11}

&
$S\uparrow$ & $T\downarrow$
&
$S\uparrow$ & $T\downarrow$
&
$S\uparrow$ & $T\downarrow$
&
$S\uparrow$ & $T\downarrow$
&
$S\uparrow$ & $T\downarrow$
\\

\hline

Base agent
& 72.93 & 1.44
& \underline{81.64} & 1.04
& 44.58 & 3.97
& 71.00 & 1.10
& 67.54 & 1.89
\\

\hline

Window ($K=5$)
& 66.81 & 3.12
& 67.43 & 3.98
& 32.32 & 2.13
& 59.00 & 2.58
& 56.39 & 2.95
\\

Window ($K=10$)
& 74.37 & 1.76
& 73.05 & 1.82
& 40.07 & \underline{0.82}
& 61.00 & 3.55
& 62.12 & 1.99
\\

Window ($K=20$)
& \underline{79.31} & 1.42
& \textbf{85.11} & 2.23
& 40.28 & 1.13
& 70.00 & 3.30
& 68.67 & 2.02
\\

Periodic summary ($n=3$)
& 65.44 & 1.91
& 67.66 & 2.14
& 26.32 & 0.99
& \textbf{82.66} & 3.82
& 60.52 & 2.21
\\

Periodic summary ($n=5$)
& \textbf{83.60} & 1.76
& 75.74 & 2.55
& 40.03 & 1.29
& 81.32 & 3.88
& \underline{70.17} & 2.37
\\

\hline

PACE~\gemmethodcite{wei2026pace}
& 70.38 & \underline{0.94}
& 55.75 & 1.01
& 43.53 & \textbf{0.57}
& 64.02 & 1.89
& 58.42 & \underline{1.10}
\\

LLMLingua-2~\gemmethodcite{pan2024llmlingua}
& 64.64 & 1.63
& 73.31 & 3.41
& 40.10 & 3.49
& 70.00 & 1.09
& 62.01 & 2.41
\\

SelfCompact~\gemmethodcite{li2026self}
& 76.83 & 1.57
& 79.62 & \underline{0.85}
& 36.19 & 3.14
& 71.00 & 0.71
& 65.91 & 1.57
\\

ACON-Core~\gemmethodcite{kang2025acon}
& 71.79 & 1.48
& 68.55 & 2.15
& \underline{47.94} & 1.51
& 65.00 & \underline{0.47}
& 63.32 & 1.40
\\

Self-GC~\gemmethodcite{hao2026self}
& 63.45 & 1.17
& 71.67 & 1.18
& 43.70 & 3.44
& 71.00 & 0.84
& 62.46 & 1.66
\\

LRE~\gemmethodcite{jahan2026learning}
& 62.62 & 2.26
& 63.44 & 3.47
& 27.14 & 2.85
& 68.00 & 0.59
& 55.30 & 2.29
\\

CoMem~\gemmethodcite{zhang2026comem}
& 56.43 & \textbf{0.55}
& 59.86 & \textbf{0.75}
& 2.50 & 1.00
& 58.00 & 0.76
& 44.20 & \textbf{0.77}
\\

SAM~\gemmethodcite{hu2026sam}
& 67.62 & 1.41
& 81.63 & 1.09
& 47.23 & 1.91
& 67.00 & 0.97
& 65.87 & 1.35
\\

SWE-Pruner~\gemmethodcite{wang2026swe}
& 63.45 & 2.17
& 72.60 & 1.43
& 40.87 & 1.82
& 65.00 & 0.58
& 60.48 & 1.50
\\

Sculptor~\gemmethodcite{li2026sculptor}
& 51.19 & 1.03
& 81.62 & 0.99
& 35.24 & 2.45
& 69.00 & 0.71
& 59.26 & 1.29
\\

ACM~\gemmethodcite{li2026acm}
& 38.69 & 2.54
& 55.93 & 1.53
& 6.19 & 5.20
& 34.00 & \textbf{0.19}
& 33.70 & 2.36
\\

\hline
\rowcolor{gemrow}
\textbf{\gem{}$^\dagger$}
& 76.67 & 1.74
& 78.51 & 1.32
& \textbf{55.31} & 3.88
& \underline{82.00} & 2.08
& \textbf{73.12} & 2.26
\\

\hline
\end{tabularx}

\vspace{2pt}
\end{table}

\FloatBarrier
\subsection{Fixed-40 Context-Management Reference Panel}
\label{sec:method-comparison}

Table~\ref{tab:method-comparison} reports reward and token use for the fixed-40
reference panel in each domain. The panel is useful for understanding how
tradeoffs between reward and token use vary across Code, Office, Security, and Web, but it is
\emph{not} used as a matched \gem{} comparison because the available \gem{}
same-ID slice belongs to the Full260 carrier snapshot. The fixed-40 evaluation
protocol is described in Appendix~\ref{app:fixed40-reference}.

The reference panel illustrates why compression should not be summarized by a
single token ratio.  For example, periodic summarization at $n=5$ has a relatively
high mean score in this panel but uses more tokens than the uncompressed base,
whereas CoMem uses substantially fewer tokens with a much lower mean score.
The domain columns show that these operating points vary substantially across
task families.

\FloatBarrier
\subsection{Online Ablation and Compression Strength}
\label{sec:ablation}

The online ablation uses $20$ formal-valid tasks, five from each domain.  Random
completion retains the same evidence rules and candidate pool but replaces
largest-residual completion with random selection.  Geometry-only removes the
evidence-protection bundle while retaining residual selection and the online
bookkeeping required for valid requests.

\begin{table}[t]
\centering
\caption{Online 20-task ablation by domain.
$S$ is mean reward $\times100$ and $T$ is combined tokens per task (M).
Each domain contains five tasks.}
\label{tab:online-ablation}
\vspace{5pt}
\footnotesize
\setlength{\tabcolsep}{2.4pt}
\renewcommand{\arraystretch}{1.08}

\begin{tabularx}{\linewidth}{
@{}l|YY|YY|YY|YY|YY@{}
}
\hline

\multirow{2}{*}{\textbf{Variant}}
& \multicolumn{2}{c|}{\textbf{Code}}
& \multicolumn{2}{c|}{\textbf{Office}}
& \multicolumn{2}{c|}{\textbf{Security}}
& \multicolumn{2}{c|}{\textbf{Web}}
& \multicolumn{2}{c}{\textbf{Overall}}
\\

\cline{2-11}

&
$S\uparrow$ & $T\downarrow$
&
$S\uparrow$ & $T\downarrow$
&
$S\uparrow$ & $T\downarrow$
&
$S\uparrow$ & $T\downarrow$
&
$S\uparrow$ & $T\downarrow$
\\

\hline

Base agent
& 59.79 & \textbf{2.33}
& \textbf{74.28} & \underline{3.02}
& \textbf{62.95} & 10.70
& 46.00 & 4.73
& \textbf{60.75} & 5.19
\\

\hline

\rowcolor{gemrow}
\textbf{\gem{}}
& \textbf{70.79} & 2.52
& \underline{69.32} & 3.18
& 53.45 & \textbf{4.33}
& \underline{48.00} & \underline{3.41}
& \underline{60.39} & \textbf{3.36}
\\

\hline

Random completion
& 47.40 & 2.72
& 60.86 & 5.20
& \underline{56.69} & \underline{6.69}
& 47.30 & \textbf{3.23}
& 53.06 & 4.46
\\

Geometry only
& \underline{60.93} & \underline{2.40}
& 59.62 & \textbf{1.25}
& 48.92 & 8.14
& \textbf{50.64} & 4.71
& 55.03 & \underline{4.12}
\\

\hline
\end{tabularx}

\vspace{2pt}
\end{table}

On this batch, the full method scores $60.39$ versus $60.75$ for the base agent
(a $0.36$-point difference) while using about $35.30\%$ fewer combined tokens overall. The domain columns
make the tradeoff explicit: on the five Code tasks \gem{} raises reward while
slightly increasing tokens, whereas on the five Security tasks it sharply reduces
tokens but loses reward. Relative to random completion and geometry-only selection,
the full method has the highest overall reward and the lowest overall token use in
this batch. The ablation therefore supports the combined selector without implying
a universally monotone relationship between reward and compression.

\begin{figure}[t]
\centering
\includegraphics[width=\linewidth]{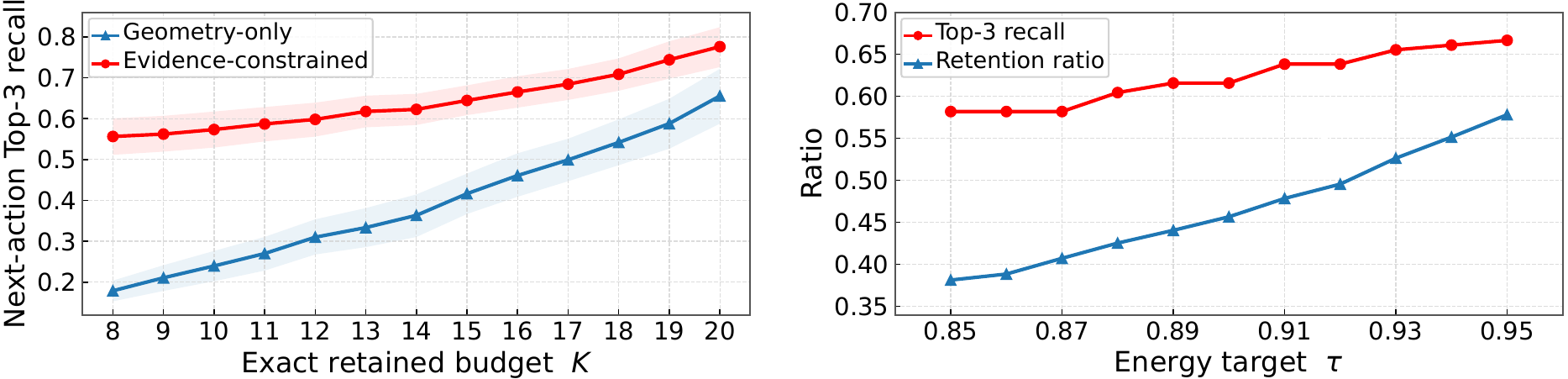}
\vspace{-20pt}
\caption{Compression strength.  (a) Next-action Top-3 retention at exact retained
budgets $K$.  (b) Top-3 retention and retained-history ratio as the coverage
threshold $\tau$ increases.  The exact-$K$ sweep uses $28$ trajectories and
$525$ common checkpoints; the coverage sweep uses $32$ trajectories and $59$
checkpoints.}
\label{fig:sensitivity}
\end{figure}

Evidence-constrained selection has higher Top-3 retention than geometry-only
selection at every tested exact budget $K\in[8,20]$.  Increasing $\tau$ from
$0.85$ to $0.95$ raises the retained fraction from $0.38$ to $0.53$ and
Top-3 retention from $0.58$ to $0.65$.  The default $\tau=0.90$ is therefore
an operating point on an observed tradeoff between retention and evidence, not a claim of a
universally optimal threshold.  The complete sweep values are listed in
Appendix~\ref{app:sweeps}.

\section{Conclusion}
\label{sec:conclusion}

Agent histories can be highly redundant in representation space while still
containing low-energy records that matter for continued execution.  This paper
separates those two notions of importance.  Same-budget selection and controlled
replacement show that preserving dominant geometry does not guarantee preservation
of next-action evidence.  \gem{} converts this observation into a simple external
compressor: protect task and execution evidence first, then use geometric
residuals to complete coverage.  It preserves original messages and leaves the
acting model unchanged.  On WorkBuddyBench Full260, the reported operating point
reduces mean combined tokens from $2.69$M to $2.11$M per task while maintaining
a similar aggregate reward, with clear domain-dependent tradeoffs.  The broader
lesson is that repeated content can still be consequential evidence; effective
history compression should model that distinction explicitly.

\subsection*{Reproducibility Statement}
We provide the methodological definitions, selection algorithm, implementation settings, evaluation protocols, ablation configurations, and additional analyses required to reproduce the reported results in the main text and appendix. The acting model and representation encoder remain frozen under the default setting, and all reported comparisons follow the evaluation and token-accounting procedures described in the manuscript.

\subsection*{AI Use Statement}
Generative AI tools assisted with code development, analysis scripts, figure and table preparation, and manuscript editing under author supervision. All experimental results and scientific claims were reviewed and verified by the authors, who take responsibility for the final manuscript.

\FloatBarrier
\clearpage
\setlength{\bibsep}{.5ex plus .8ex}
\IfFileExists{iclr2027_conference.bib}{%
  \IfFileExists{iclr2027_conference.bst}{%
    \bibliographystyle{unsrtnat}
  }{\bibliographystyle{unsrtnat}}
  \bibliography{iclr2027_conference}
}{%
  \section*{References}
  The bibliography is supplied by the existing
  \texttt{iclr2027\_conference.bib} file.  Citation keys are retained from the
  source project so this file can drop directly into the existing manuscript.
}

\clearpage
\clearpage
\appendix
\numberwithin{equation}{section}
\numberwithin{table}{section}
\newtheorem{gemprop}{Proposition}[section]
\newenvironment{gemproof}{\par\noindent\textit{Proof.} }{\hfill$\square$\par\medskip}


\section{Mathematical Derivations and Properties}
\label{app:theory}

We derive the geometric and structural properties used by \gem{}.
Throughout this section, the available history is fixed, the block vectors
$x_i\in\mathbb R^d$ have unit norm, and $Q_B$ contains an orthonormal basis
for the span of the selected vectors. The statements concern these
representation-space quantities and the selection and forwarding operations.

\subsection{Residual Selection and Reconstruction Error}
\label{app:projection-derivation}

Write $X=[x_1,\ldots,x_n]^\top$ and $P_B=Q_BQ_B^\top$, with
$P_\varnothing=0$. The selected block identities and the numerical basis are
maintained separately: a protected block remains selected even if its vector
is already in the span of the other selected vectors.
For any block, the orthogonal decomposition is
\begin{equation}
 x_i=P_Bx_i+e_i(B),\qquad e_i(B)=(I-P_B)x_i.
 \label{eq:app-residual-vector}
\end{equation}

\begin{gemprop}[Residual score and geometric coverage]
For normalized block vectors,
\begin{align}
 r_i(B)&=\|e_i(B)\|_2^2
       =1-\|Q_B^\top x_i\|_2^2,\label{eq:app-residual-identity}\\
 E(B)&=\frac{\|XQ_B\|_F^2}{n}
      =1-\frac{\|X(I-P_B)\|_F^2}{n}.
 \label{eq:app-coverage-error}
\end{align}
Consequently, $0\le r_i(B)\le1$, $0\le E(B)\le1$, and a core satisfying
$E(B)\ge\tau$ has mean squared embedding reconstruction error at most
$1-\tau$.
\end{gemprop}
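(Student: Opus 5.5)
The plan is to derive everything from the orthogonal decomposition \eqref{eq:app-residual-vector} together with the facts that $P_B=Q_BQ_B^\top$ is a symmetric idempotent projector and that $Q_B^\top Q_B=I$. Both facts hold because $Q_B$ has orthonormal columns; for $B=\varnothing$ we take the empty basis, so $P_\varnothing=0$.

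For \eqref{eq:app-residual-identity}, I would expand $\|x_i\|_2^2=\|P_Bx_i\|_2^2+\|e_i(B)\|_2^2$. The cross term vanishes since $(P_Bx_i)^\top(I-P_B)x_i=x_i^\top(P_B-P_B^2)x_i=0$. Next I would use
\[
\|P_Bx_i\|_2^2=x_i^\top Q_BQ_B^\top Q_BQ_B^\top x_i=\|Q_B^\top x_i\|_2^2 .
\]
With $\|x_i\|_2=1$ this gives $\|e_i(B)\|_2^2=1-\|Q_B^\top x_i\|_2^2$, which is the definition of $r_i(B)$ in \eqref{eq:residual-selection}. The bounds $0\le r_i(B)\le 1$ follow at once, since $r_i(B)$ is a squared norm and also equals $1$ minus a squared norm.

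For \eqref{eq:app-coverage-error}, I would sum over rows. The $i$-th row of $XQ_B$ is $x_i^\top Q_B$, so
\[
\|XQ_B\|_F^2=\sum_i\|Q_B^\top x_i\|_2^2=nE(B),
\]
which matches the definition \eqref{eq:retained-geometry}. Similarly, the $i$-th row of $X(I-P_B)$ is $e_i(B)^\top$, because $I-P_B$ is symmetric. Hence $\|X(I-P_B)\|_F^2=\sum_i r_i(B)=n-nE(B)$ by the row identity. Averaging the bounds $0\le r_i\le1$ then gives $0\le E(B)\le1$.

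The final claim is a rereading of the same identity. The mean squared reconstruction error of projecting each block onto the selected span is $\frac1n\sum_i\|x_i-P_Bx_i\|_2^2=1-E(B)$. So $E(B)\ge\tau$ bounds it by $1-\tau$. Nothing here is a real obstacle. The only point needing care is that $Q_B$ must be a genuine orthonormal basis of the span even when protected vectors are linearly dependent, which is why the statement separates block identities from the numerical basis. The identities depend only on the span, not on how many selected blocks generate it, so duplicate protected blocks do not affect the argument.
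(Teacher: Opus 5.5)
Your proposal is correct and follows essentially the same route as the paper: Pythagoras via the symmetric idempotent projector $P_B$, orthonormality to identify $\|P_Bx_i\|_2^2$ with $\|Q_B^\top x_i\|_2^2$, the unit norms, and then summation over the $n$ rows. You only make explicit the cross-term and row-wise Frobenius computations that the paper's proof leaves implicit.
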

\begin{gemproof}
Because $P_B$ is a symmetric idempotent projector, $P_Bx_i$ is orthogonal
to $(I-P_B)x_i$. Thus
$\|x_i\|_2^2=\|P_Bx_i\|_2^2+\|e_i(B)\|_2^2$.
Orthonormality gives $\|P_Bx_i\|_2^2=\|Q_B^\top x_i\|_2^2$.
Using $\|x_i\|_2^2=1$ proves the first identity. Summing it over all
$n$ blocks gives the second identity and the stated bounds.
\end{gemproof}

Equation~\ref{eq:app-residual-identity} explains the completion rule: the
largest-residual candidate contributes a direction that is poorly represented
by the current selected span. Equation~\ref{eq:app-coverage-error} gives the
coverage target its direct reconstruction interpretation. These quantities
are evaluated on the original normalized vectors, matching the online
selector; the centered spectral analysis is defined separately in
Appendix~\ref{app:geometry}.

\subsection{Coverage Gain and Incremental Basis Updates}
\label{app:coverage-gain}

For an eligible candidate $j$ with $r_j(B)>0$, modified Gram Schmidt
constructs the new unit direction
\begin{equation}
 q=\frac{x_j-Q_B(Q_B^\top x_j)}{\sqrt{r_j(B)}},
 \qquad Q_{B\cup\{j\}}=[Q_B,q].
 \label{eq:app-new-direction}
\end{equation}

\begin{gemprop}[Monotone coverage completion]
For the update in Eq.~\ref{eq:app-new-direction},
\begin{align}
 P_{B\cup\{j\}}&=P_B+qq^\top,\label{eq:app-projector-update}\\
 E(B\cup\{j\})-E(B)
 &=\frac1n\sum_{i=1}^n(q^\top x_i)^2,\label{eq:app-energy-gain}\\
 r_i(B\cup\{j\})&=r_i(B)-(q^\top x_i)^2.
 \label{eq:app-residual-update}
\end{align}
In particular, the coverage gain is at least $r_j(B)/n>0$.
More generally, $B\subseteq S$ implies $E(S)\ge E(B)$.
\end{gemprop}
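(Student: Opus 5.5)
The plan is to establish the projector update first and read the other two identities off it. Write $Q_B$ with orthonormal columns and set $u=x_j-Q_B(Q_B^\top x_j)=(I-P_B)x_j=e_j(B)$. By the residual-score proposition, $\|u\|_2^2=r_j(B)>0$, so $q$ is a well-defined unit vector. Moreover $Q_B^\top u=Q_B^\top x_j-Q_B^\top x_j=0$, so $q$ is orthogonal to every column of $Q_B$. Hence $[Q_B,q]$ has orthonormal columns, and its span equals $\operatorname{span}(Q_B)+\operatorname{span}(x_j)$, because $x_j=Q_B(Q_B^\top x_j)+\sqrt{r_j(B)}\,q$. Therefore $P_{B\cup\{j\}}=[Q_B,q][Q_B,q]^\top=Q_BQ_B^\top+qq^\top=P_B+qq^\top$. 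The only point needing care is that $Q_{B\cup\{j\}}$ spans exactly the span of the selected vectors. This is what makes the notation $P_{B\cup\{j\}}$ consistent with the convention that $Q_S$ spans the selected vectors.

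For the residual update, I will use $r_i(\cdot)=1-x_i^\top P_{(\cdot)}x_i$ from the earlier proposition and substitute the projector update: $x_i^\top P_{B\cup\{j\}}x_i=x_i^\top P_Bx_i+(q^\top x_i)^2$. This gives $r_i(B\cup\{j\})=r_i(B)-(q^\top x_i)^2$. Averaging $1-r_i$ over $i$ then yields the energy gain, since $E(B)=\frac1n\sum_i\|Q_B^\top x_i\|_2^2=\frac1n\sum_i(1-r_i(B))$.

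For the lower bound on the gain, I keep only the $i=j$ term of the nonnegative sum. Since $q^\top x_j=q^\top u/\sqrt{r_j(B)}+q^\top Q_B(Q_B^\top x_j)=\sqrt{r_j(B)}$, we get $(q^\top x_j)^2=r_j(B)$, and so the gain is at least $r_j(B)/n>0$.

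For general monotonicity under $B\subseteq S$, the key fact is $\operatorname{span}\{x_i:i\in B\}\subseteq\operatorname{span}\{x_i:i\in S\}$. I would extend an orthonormal basis $Q_B$ to an orthonormal basis of the larger span. Equivalently, I would add the elements of $S\setminus B$ one at a time: elements with zero residual leave the span and projector unchanged, and elements with positive residual apply the update above with a nonnegative gain. Either way $P_S=P_B+WW^\top$ for some $W$ with orthonormal columns, so $x_i^\top P_Sx_i\ge x_i^\top P_Bx_i$ for every $i$, and averaging gives $E(S)\ge E(B)$. I expect the main obstacle to be bookkeeping rather than analysis: $E$ must depend only on the span and not on the particular basis. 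I would handle this by noting that $\|Q^\top x\|_2^2=x^\top QQ^\top x$ and that the orthogonal projector onto a subspace is unique.
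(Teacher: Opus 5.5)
Your proposal is correct and follows the paper's proof essentially step for step. Orthogonality of $q$ to the columns of $Q_B$ gives $P_{B\cup\{j\}}=P_B+qq^\top$, substitution gives the energy and residual identities, the $i=j$ term gives the lower bound, and nested sets are handled by extending an orthonormal basis. You also add two useful checks: that $[Q_B,q]$ spans exactly the selected vectors, and that $E$ depends only on the span.

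One small slip: in computing $q^\top x_j$, the first term should be $u^\top u/\sqrt{r_j(B)}$ (equivalently $q^\top u$), not $q^\top u/\sqrt{r_j(B)}$, which literally equals $1$. Your final value $(q^\top x_j)^2=r_j(B)$ is nonetheless correct.
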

\begin{gemproof}
The normalized residual $q$ is orthogonal to every column of $Q_B$.
The projector onto the expanded span is therefore $P_B+qq^\top$.
Substituting this expression into the definition of $E$ gives
Eq.~\ref{eq:app-energy-gain}, and subtracting the new projected component
gives Eq.~\ref{eq:app-residual-update}. All terms in the coverage gain
are nonnegative, and the $i=j$ term equals $r_j(B)$.
For nested sets $B\subseteq S$, extend $Q_B$ by an orthonormal basis
of the additional directions in $\operatorname{span}(S)$ and apply the
same argument to each added direction.
\end{gemproof}

For $m$ added independent directions $q_1,\ldots,q_m$, the increments
sum to
\begin{equation}
 E(B_m)=E(B_0)+\frac1n\sum_{k=1}^{m}\|Xq_k\|_2^2.
 \label{eq:app-cumulative-coverage}
\end{equation}
Thus geometric completion progressively covers additional directions after
the evidence set has been protected. The residual-update identity also
allows candidate scores to be updated without rebuilding all previous
projections at each iteration.

\subsection{Evidence Preservation and Forwarding Invariants}
\label{app:invariants}

The selector begins with $B_0=\mathcal P_t$ and uses the expanded capacity
$K_t=\max\{K,|\mathcal P_t|\}$. It only adds indices during completion.
The forwarding stage starts from the full valid request and considers
removing indices outside the completed core $B_t$.

\begin{gemprop}[Protected and structurally valid forwarding]
For an accepted global rewrite,
\begin{equation}
 \mathcal P_t\subseteq B_t\subseteq S_t\subseteq[n],\qquad
 L(S_t)\ge(1-\delta)L([n]).
 \label{eq:app-forwarding-invariants}
\end{equation}
Every retained removable block contains its complete assistant call and result group, and the retained messages preserve their original chronological order.
Between global reselections, under stable block identities, the append rule
satisfies
\begin{equation}
 S_{t+1}=S_t\cup\mathcal N_{t+1}\cup\mathcal A_{t+1},
 \qquad S_t\subseteq S_{t+1},
 \label{eq:app-append-invariant}
\end{equation}
where $\mathcal N_{t+1}$ and $\mathcal A_{t+1}$ are newly available blocks
and newly activated evidence, respectively.
\end{gemprop}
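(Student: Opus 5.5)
The proposition concerns the selection and forwarding procedures rather than geometry, so I will prove it by tracking invariants through each stage of the algorithm. I will verify each invariant at initialization and then show that every operation the algorithm performs preserves it.

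\textbf{The chain $\mathcal P_t\subseteq B_t$.} Greedy completion is initialized with $B_0=\mathcal P_t$. Each step of Eq.~\ref{eq:residual-selection} only appends an index $i^\star\notin B$; it never deletes one. Induction on the number of completion steps then gives $\mathcal P_t\subseteq B_0\subseteq B_1\subseteq\cdots\subseteq B_t$.

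Two points need care here. First, the capacity $K_t=\max\{K,|\mathcal P_t|\}$ is never below $|\mathcal P_t|$, so the capacity check cannot force any protected index out. It can only stop further additions. Second, the remark in Appendix~\ref{app:projection-derivation} matters. A protected block whose vector is already in the span of other selected vectors stays in $B$ as an identity, even though it contributes no column to $Q_B$. Since identities and the numerical basis are tracked separately, membership is unaffected by rank deficiency.

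\textbf{The chain $B_t\subseteq S_t\subseteq[n]$ and the length guard.} Forwarding starts at $S=[n]$ and considers for removal only indices outside $B_t$. So $B_t\subseteq S$ holds initially and is preserved by every removal, and $S\subseteq[n]$ is immediate.

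For the length guard, I read ``accepted'' literally: a candidate removal is performed only when the resulting $S'$ satisfies $L(S')\ge(1-\delta)L([n])$. Otherwise it is skipped. The initial set satisfies the guard trivially, since $L([n])\ge(1-\delta)L([n])$. By induction over accepted removals, the final $S_t$ satisfies it as well. This also covers the case noted in the main text where nothing is removed.

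\textbf{Structural validity.} The units of removal are complete blocks. Each block is defined as one assistant call together with all of its results, and messages that cannot be assigned to a complete block are never candidates for removal. Every retained removable block is therefore whole.

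Reconstruction emits the retained messages from $S_t$ sorted by their original indices, and message contents are unchanged. Restricting a total order to a subset preserves it, so chronological order is preserved.

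\textbf{The append invariant.} Between global reselections, the update is defined to keep the stored forwarded set and add $\mathcal N_{t+1}$ and $\mathcal A_{t+1}$. The inclusion $S_t\subseteq S_{t+1}$ is immediate from the union, provided the indices in $S_t$ still refer to the same blocks at time $t+1$. That is exactly the stable-identity hypothesis. It also requires that the stored set is the actual forwarded $S_t$, including blocks kept by the deletion guard, rather than the core $B_t$. The paper states that the implementation stores $S_t$.

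\textbf{Main obstacle.} The difficulty is not computational. It lies in pinning down which implementation conventions the statement relies on: that ``accepted'' means the guard was checked on the final set, that removals never split blocks, and that block identities are stable. I will state these as explicit hypotheses taken from Section~\ref{sec:geometric-completion}. With those in place, each claim reduces to the short induction above.
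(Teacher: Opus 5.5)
Your proposal is correct and follows the paper's proof step for step. The core is built from $\mathcal P_t$ by insertion only; deletions start from $[n]$ and are restricted to the complement of $B_t$; the length guard follows by induction over accepted deletions; structural validity comes from whole-block parsing plus index-sorted reconstruction; and the append rule is a union with the saved actual forwarded set. Your extra remarks, that $K_t\ge|\mathcal P_t|$ cannot evict protected indices and that block identities are tracked separately from the numerical basis, agree with the paper's surrounding discussion and make explicit what its short proof leaves implicit.
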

\begin{gemproof}
The core is initialized to $\mathcal P_t$ and is modified only by insertion,
so $\mathcal P_t\subseteq B_t$. Forwarding starts from $[n]$ and deletes
only members of its complement to $B_t$, proving $B_t\subseteq S_t$.
Initially the length constraint holds; accepting each proposed deletion
only when it continues to hold proves the bound by induction. Parsing
constructs complete call and result blocks, selection changes only whole-block
membership, and reconstruction sorts retained messages by their original
indices. This preserves call and result groups and chronological order.
Finally, the ordinary update is a set union with the saved actual forwarded
set, proving Eq.~\ref{eq:app-append-invariant}.
\end{gemproof}

Keeping the actual forwarded set is important: it preserves not only the core
but also the additional original blocks retained by the request-length guard.
Protected block membership is therefore independent of whether a protected
embedding introduces a new basis direction.

\FloatBarrier
\section{Algorithm and Implementation Details}
\label{app:algorithm}

\subsection{Complete Blocks and Selection Representations}

A removable interaction block contains an assistant message with one or more
tool calls and all corresponding tool-result messages, matched by call ID.
The parser checks identifier uniqueness and complete matching between calls and results.
The system/task prefix and non-removable messages are preserved independently.
When parsing or reconstructed-request validation fails, the original request
is returned.

The selection representation concatenates assistant reasoning, assistant
text, tool names and arguments, and tool observations under the markers
\texttt{ASSISTANT\_REASONING}, \texttt{ASSISTANT\_TEXT},
\texttt{ACTION}, and \texttt{OBSERVATION}. This encoder input is limited
to 12,000 characters, retaining its beginning and end for overlength blocks.
Only the selection representation is shortened; forwarded blocks use the
complete, unchanged original message objects.
The frozen encoder is Qwen3-Embedding-8B with 1,024 output dimensions.

This block-level interface differs from compressed retrieved text
\citep{xu2024recomp} and learned compact representations such as gist tokens
and in-context autoencoding \citep{mu2023learning,ge2023context}.
In \gem{}, embeddings guide which original blocks to retain; the acting
model continues to receive ordinary messages.

\subsection{Evidence Protection and Shared Settings}
\label{app:evidence-protection}

The protected set is a deduplicated union
\begin{equation}
 \mathcal P_t=
 \mathcal P_t^{\mathrm{recent}}\cup
 \mathcal P_t^{\mathrm{goal}}\cup
 \mathcal P_t^{\mathrm{state}}\cup
 \mathcal P_t^{\mathrm{error}}\cup
 \mathcal P_t^{\mathrm{source}}.
 \label{eq:app-protected-union}
\end{equation}
Recent protection retains four complete blocks of immediate working context.
Goal protection retains the eligible block most similar to the current task.
State protection retains up to three relevant records using the state query,
artifact paths, explicit editing calls, and supported Bash/Python write patterns.
Error protection retains up to two recent error records within eight blocks.
Source protection is activated when source-like blocks constitute at least
half of the history and keeps recent evidence from distinct sources.

These rules distinguish the execution roles of records rather than treating
semantic similarity as interchangeability. State-related and persistent
memory provide the relevant context for this distinction
\citep{chen2026beyond,zhou2026mem1,wu2026remember,hu2026sam}.
The numerical protection limits in Table~\ref{tab:implementation-settings}
are the shared \gem{} settings. A detected write is treated as evidence of
a possible state change; preserving it does not assert that the operation
succeeded.

\begin{table}[!htbp]
\centering
\begin{threeparttable}
\caption{Shared implementation settings across Code, Office, Security, and Web.}
\label{tab:implementation-settings}
\small
\setlength{\tabcolsep}{7pt}
\renewcommand{\arraystretch}{1.12}
\begin{tabular}{ll}
\toprule
\textbf{Setting} & \textbf{Value} \\
\midrule
Minimum complete blocks & 16 \\
Coverage target $\tau$ / nominal core capacity $K$ & 0.90 / 16 \\
Recent / Goal / State / Error limits & 4 / 1 / 3 / 2 \\
Error eligibility horizon & 8 blocks \\
\addlinespace[3pt]
Source protection activation & Source-like fraction $\ge0.50$ \\
Initial-block protection & 0 \\
Candidate / fill policy & \shortstack[l]{\texttt{semantic\_evidence} /\\\texttt{max\_residual}} \\
Maximum request-character reduction $\delta$ & 0.05 \\
Global reselection interval & 256 new blocks \\
\addlinespace[3pt]
Selection-representation length & 12,000 characters \\
Embedding dimension / batch size & 1,024 / 16 \\
Recorded \texttt{budget\_k} & 8 \\
Random-completion seed & 42 \\
Model temperature / context window & 0.0 / 1,000,000 tokens \\
\bottomrule
\end{tabular}
\begin{tablenotes}[flushleft]
\footnotesize
\item[] The core capacity expands to include all required evidence. The recorded
\texttt{budget\_k} is not the final forwarded-block count: the protected set,
core capacity, and request-length guard jointly determine the forwarded set.
\end{tablenotes}
\end{threeparttable}
\end{table}

\subsection{Global Selection and Incremental Forwarding}

Every request is inspected. Requests with fewer than 16 complete blocks remain
unchanged. Global selection is triggered at the first eligible request, after
a session reset/rewrite or goal change, or after 256 newly accumulated blocks.
It initializes the core with protected evidence and adds largest-residual
candidates until coverage, capacity, or candidate exhaustion terminates selection.

The final forwarded set starts from the full safe request. Only blocks outside
the selected core are considered for deletion, with larger redundant blocks
considered first. Each accepted deletion must preserve at least 95\% of the
original request's serialized character count. Ordinary append events keep
the saved actual forwarded set, append new blocks and activated evidence,
and preserve chronological order. Algorithm~\ref{alg:compression} combines
these steps.

\begin{algorithm}[H]
\caption{\gem{} at an intercepted agent request}
\label{alg:compression}
\let\AND\relax
\begin{algorithmic}[1]
\STATE \textbf{Input:} request, goal, saved forwarded state, $\tau$, $K$, $\delta$
\STATE Parse complete blocks and preserve the non-removable prefix
\IF{the tool protocol is invalid or fewer than 16 complete blocks are available}
  \STATE \textbf{return} the original request
\ENDIF
\STATE Construct and deduplicate the protected set $\mathcal P_t$
\IF{global reselection is triggered}
  \STATE Encode blocks; set $B\gets\mathcal P_t$ and $K_t\gets\max\{K,|\mathcal P_t|\}$
  \STATE Construct an orthonormal basis $Q_B$ of the selected representations
  \WHILE{$|B|<K_t$, $E(B)<\tau$, and an independent eligible candidate exists}
    \STATE Select $j\gets\arg\max_{i\notin B}(1-\|Q_B^\top x_i\|_2^2)$ over eligible candidates
    \STATE Add $j$ to $B$ and update the basis and residuals
  \ENDWHILE
  \STATE Initialize $S\gets[n]$; save the original request length $L_0$
  \FOR{eligible blocks $i\notin B$ in the deletion-priority order}
    \IF{$L(S\setminus\{i\})\ge(1-\delta)L_0$}
      \STATE $S\gets S\setminus\{i\}$
    \ENDIF
  \ENDFOR
\ELSE
  \STATE Keep the saved forwarded set and append new blocks and activated evidence
  \STATE Let $S$ be the resulting index set in the current history
\ENDIF
\STATE Reconstruct complete original messages in chronological order
\IF{the reconstructed request fails structural validation}
  \STATE \textbf{return} the original request
\ENDIF
\STATE Save the actual forwarded set and session/goal bookkeeping
\STATE \textbf{return} the reconstructed request
\end{algorithmic}
\end{algorithm}

With $n$ encoded blocks, dimension $d$, and final basis rank $b$, incremental
projection updates in Eq.~\ref{eq:app-residual-update} can be implemented in
$O(ndb)$ arithmetic and $O(nd+db+n)$ working storage, excluding the original
messages and encoder. These are operation-count bounds for incremental updates;
encoding and request serialization are accounted for separately.

\FloatBarrier
\section{Additional Geometry and Evidence Analysis}
\label{app:geometry}

\subsection{Representations, Centering, and Matched Controls}

The isolated-block analysis uses 171 trajectories and 4,701 complete blocks,
encoded independently with frozen Qwen3.5-9B mean layer-32 representations
of dimension 4,096. The selector-space analysis uses 32 trajectories and
1,162 block representations of dimension 1,024. These are the two
representation populations used in Section~\ref{sec:lowdim-result}.

For $X_c=X-\mathbf1\bar x^\top=U\Sigma V^\top$ with positive centered
energy, define
\begin{equation}
 p_j=\frac{\sigma_j^2}{\sum_k\sigma_k^2},\qquad
 r_{\mathrm{eff}}=\exp\!\left(-\sum_jp_j\log p_j\right),\qquad
 \bar r_{90}=\frac1n\min\!\left\{k:\sum_{j\le k}p_j\ge0.90\right\}.
 \label{eq:app-spectrum}
\end{equation}
The centered analysis isolates variation around the mean direction, complementing
studies of representation anisotropy and dominant embedding directions
\citep{ethayarajh2019contextual,mu2017all,gao2021simcse}.

The feature-shift control independently circularly shifts each feature along
the block axis, preserving its marginal values while disrupting cross-feature
alignment. The Gaussian control matches the trajectory matrix dimensions.
Both comparisons are performed within trajectories before aggregation.
Table~\ref{tab:lowdim-main} gives the aggregate dimensionality statistics
corresponding to the main-text comparison.

\begin{table}[!htbp]
\centering
\begin{threeparttable}
\caption{Structured low-dimensionality of agent histories. Values are trajectory
medians for centered representations; $r_{90}/n$ is the fraction of available
blocks required to explain $90\%$ of centered energy.}
\label{tab:lowdim-main}
\small
\setlength{\tabcolsep}{7pt}
\renewcommand{\arraystretch}{1.12}
\begin{tabular}{llrrr}
\toprule
\textbf{Representation} & \textbf{Metric} & \textbf{Real} & \textbf{Feature-shift} & \textbf{Gaussian} \\
\midrule
Isolated blocks & Effective rank & 8.39 & 16.99 & 19.96 \\
Isolated blocks & $r_{90}/n$ & 0.47 & 0.78 & 0.88 \\
\addlinespace[3pt]
Selector space & Effective rank & 14.04 & 23.54 & 30.07 \\
Selector space & $r_{90}/n$ & 0.48 & 0.78 & 0.85 \\
\bottomrule
\end{tabular}
\end{threeparttable}
\end{table}

Real histories have lower effective rank than their feature-shift controls
in all 171 isolated-block comparisons and all 32 selector-space comparisons.
The consistent pattern across the two spaces supports structured geometric
redundancy as a basis for the completion stage.

\subsection{Matched-Budget Evidence Retention}
\label{app:same-budget}

The matched-budget analysis holds the retained block count fixed at each
checkpoint. Both selectors are measured on the same available history and
recorded next action. The online compressor has access only to the current
goal and available history; the recorded next action is used for offline
evaluation.

For action representation $a$, the quantities in
Eq.~\ref{eq:task-support} distinguish projected action-direction energy from
retention of actual neighboring history records. The action projection is
$\|Q_S^\top a\|_2^2$, and Top-3 retention is the fraction of the three
history blocks most similar to $a$ that remain in $S$.
The 650-checkpoint comparison in Table~\ref{tab:matched-budget-main}
shows higher action-related retention with evidence constraints at the same
retained count, while the reported centroid similarity remains 0.98.

Let $f_{jc}$ denote a checkpoint metric for checkpoint $c$ of trajectory $j$,
and let $m_j$ be that trajectory's checkpoint count. The pooled statistic is
\begin{equation}
 \bar f_{\mathrm{pooled}}
 =\frac{\sum_{j=1}^{32}\sum_{c=1}^{m_j}f_{jc}}{\sum_{j=1}^{32}m_j},
 \qquad \sum_{j=1}^{32}m_j=650.
 \label{eq:app-pooled}
\end{equation}
This specifies the weighting used for pooled checkpoint means; trajectories
remain the independent experimental units. No individual task outcomes are
needed to interpret this representation-level comparison.

\subsection{Low-Energy Subspace and Controlled Replacement}
\label{app:directional-intervention}

Let $V_{\mathrm{low}}$ contain the centered-history singular vectors after
the 90\% energy cutoff, and write $\tilde x_i=x_i-\bar x$.
The block's low-energy fraction and its alignment with the action are
\begin{align}
 L_i&=\frac{\|V_{\mathrm{low}}^\top\tilde x_i\|_2^2}
            {\|V^\top\tilde x_i\|_2^2},\label{eq:app-low-energy-fraction}\\
 A_i&=\cos\!\left(V_{\mathrm{low}}V_{\mathrm{low}}^\top\tilde x_i,
                 V_{\mathrm{low}}V_{\mathrm{low}}^\top a\right).
 \label{eq:app-low-energy-alignment}
\end{align}
These quantities are evaluated when their denominators and cosine norms are
nonzero. The targeted condition removes a retained block with high $A_i$;
the matched condition removes a low-alignment block while matching
representation similarity, recency, and serialized length. The two conditions
retain the same number of blocks. These definitions specify the intervention
reported in Table~\ref{tab:targeted-replacement}.

The spectral visualization groups ordered directions into rank bands, whereas
the intervention uses the cumulative-energy cutoff above. Keeping these
definitions separate connects the visualization to the formal replacement
analysis without treating rank fractions as energy fractions.

\FloatBarrier
\section{Ablation and Hyperparameter Analysis}
\label{app:component}

\subsection{Selectors at the Same Retained Count}

Table~\ref{tab:selector-aggregate} compares selection strategies at the same
retained count on 32 trajectories and 59 checkpoints. This comparison isolates
which records are selected from how many records are retained. The evidence-first
selector achieves the highest reported final projection and next-action Top-3
retention while retaining high geometric coverage.

\begin{table}[!htbp]
\centering
\begin{threeparttable}
\caption{Same-retained-count offline selector comparison on 32 trajectories and
59 checkpoints.}
\label{tab:selector-aggregate}
\small
\setlength{\tabcolsep}{8pt}
\renewcommand{\arraystretch}{1.12}
\begin{tabular}{lrrrrr}
\toprule
\textbf{Selector} & \textbf{Action} & \textbf{Final} & \textbf{Goal} & \textbf{Top-3} & \textbf{Centroid} \\
\midrule
Recent-$K$ & 0.96 & 0.88 & 0.82 & 0.76 & 0.97 \\
Random-$K$ & 0.92 & 0.92 & 0.95 & 0.45 & 0.99 \\
Cosine Top-$K$ & 0.86 & 0.91 & 0.97 & 0.53 & 0.96 \\
Novelty Top-$K$ & 0.93 & 0.93 & 0.96 & 0.33 & 0.99 \\
\rowcolor{gemrow}
\gem{} & 0.95 & 0.95 & 0.96 & 0.78 & 0.99 \\
\bottomrule
\end{tabular}
\end{threeparttable}
\end{table}

\subsection{Contribution of Evidence Protection}

Table~\ref{tab:component-full} reports the component ablation at each variant's
natural retained fraction. Relative to geometry-only selection, the complete
protection bundle retains more action-related information. Removing State or
Error protection reduces Top-3 retention, supporting their role within the
combined evidence set. The natural-budget comparison complements the
fixed-count comparison above; their retained fractions and checkpoint protocols
are reported separately.

\begin{table}[!htbp]
\centering
\begin{threeparttable}
\caption{Offline component ablation at each variant's natural retained fraction
(32 trajectories, 59 checkpoints).}
\label{tab:component-full}
\small
\setlength{\tabcolsep}{8pt}
\renewcommand{\arraystretch}{1.12}
\begin{tabular}{lrrrr}
\toprule
\textbf{Variant} & \textbf{Retention} & \textbf{Action} & \textbf{Top-3} & \textbf{Centroid} \\
\midrule
Full constraints & 0.46 & 0.95 & 0.62 & 0.99 \\
Geometry only & 0.39 & 0.91 & 0.25 & 0.98 \\
Recent + Goal only & 0.45 & 0.95 & 0.57 & 0.98 \\
Without State & 0.45 & 0.95 & 0.53 & 0.98 \\
Without Error & 0.45 & 0.95 & 0.56 & 0.98 \\
All candidates & 0.44 & 0.95 & 0.59 & 0.99 \\
Maximum basis $K=12$ & 0.44 & 0.95 & 0.61 & 0.98 \\
Maximum basis $K=20$ & 0.47 & 0.95 & 0.63 & 0.99 \\
\bottomrule
\end{tabular}
\begin{tablenotes}[flushleft]
\footnotesize
\item[] Because retention differs across variants, this table does not isolate
selection quality at identical budgets.  The same-$K$ comparison above provides
the corresponding budget-controlled view.
\end{tablenotes}
\end{threeparttable}
\end{table}

For the online ablation in Table~\ref{tab:online-ablation},
\textbf{Random completion} retains the protected set and candidate pool and
fills remaining slots by seeded sampling to match the greedy reference count.
\textbf{Geometry only} removes the semantic protection groups while retaining
the candidate policy, residual rule, and request-validity bookkeeping.
These definitions separate the roles of evidence protection and geometric
completion without duplicating the online outcome table.

\subsection{Budget and Coverage Sensitivity}
\label{app:sweeps}

The exact-$K$ sweep uses 28 trajectories and 525 checkpoints common to all
budgets. Table~\ref{tab:k-sweep} gives the complete tested range $K=8,\ldots,20$,
including all four selectors. Evidence-constrained selection exceeds
geometry-only Top-3 retention throughout this range.

\begin{table}[h]
\centering
\begin{threeparttable}
\caption{Exact retained-count sweep.  Values are next-action Top-3 retention on
28 trajectories and 525 checkpoints common to all $K$.}
\label{tab:k-sweep}
\small
\setlength{\tabcolsep}{10pt}
\renewcommand{\arraystretch}{1.12}
\begin{tabular}{crrrr}
\toprule
$K$ & \shortstack[r]{\textbf{Geometry}\\\textbf{only}} & \shortstack[r]{\textbf{Evidence}\\\textbf{constrained}} & \textbf{Recent-$K$} & \textbf{Recent+Goal} \\
\midrule
8  & 0.18 & 0.56 & 0.50 & 0.51 \\
9  & 0.21 & 0.56 & 0.53 & 0.52 \\
10 & 0.24 & 0.57 & 0.55 & 0.53 \\
11 & 0.27 & 0.59 & 0.54 & 0.54 \\
12 & 0.31 & 0.60 & 0.56 & 0.56 \\
13 & 0.33 & 0.62 & 0.58 & 0.57 \\
14 & 0.36 & 0.62 & 0.60 & 0.59 \\
15 & 0.42 & 0.64 & 0.63 & 0.61 \\
16 & 0.46 & 0.67 & 0.66 & 0.63 \\
17 & 0.50 & 0.68 & 0.68 & 0.66 \\
18 & 0.54 & 0.71 & 0.69 & 0.67 \\
19 & 0.59 & 0.74 & 0.71 & 0.72 \\
20 & 0.66 & 0.78 & 0.74 & 0.74 \\
\bottomrule
\end{tabular}
\end{threeparttable}
\end{table}

The coverage sweep uses 32 trajectories and 59 checkpoints.
Table~\ref{tab:tau-sweep} shows the retained fraction, next-action Top-3
retention, and action projection for every tested threshold. Increasing
$\tau$ progressively adds geometric coverage and retains additional task evidence.
The $\tau=0.90$ row is the operating point highlighted in the main text.

\begin{table}[!htbp]
\centering
\begin{threeparttable}
\caption{Coverage-threshold sweep on 32 trajectories and 59 checkpoints.}
\label{tab:tau-sweep}
\small
\setlength{\tabcolsep}{9pt}
\renewcommand{\arraystretch}{1.12}
\begin{tabular}{crrr}
\toprule
$\tau$ & \textbf{Retained fraction} & \textbf{Top-3} & \textbf{Action projection} \\
\midrule
0.85 & 0.38 & 0.58 & 0.94 \\
0.86 & 0.39 & 0.58 & 0.94 \\
0.87 & 0.41 & 0.58 & 0.95 \\
0.88 & 0.43 & 0.61 & 0.95 \\
0.89 & 0.44 & 0.62 & 0.95 \\
\rowcolor{gemrow}
\textbf{0.90} & 0.48 & 0.64 & 0.96 \\
0.91 & 0.48 & 0.64 & 0.96 \\
0.92 & 0.49 & 0.64 & 0.96 \\
0.93 & 0.50 & 0.64 & 0.96 \\
0.94 & 0.51 & 0.64 & 0.96 \\
0.95 & 0.53 & 0.65 & 0.97 \\
\bottomrule
\end{tabular}
\end{threeparttable}
\end{table}

These sweeps provide the tabulated values corresponding to
Figure~\ref{fig:sensitivity}. They evaluate representation-level evidence
retention under controlled compression settings.

\section{Evaluation Protocol and Efficiency Analysis}
\label{app:protocol}

\subsection{Task Units and Aggregate Metrics}

The online evaluation uses WorkBuddyBench \citep{team2026tencent} with
DeepSeek-V4-Flash \citep{xu2026deepseek} at temperature zero.
A complete task is the unit for reward and token aggregation. For a domain
$d$ with $N_d$ tasks, define
\begin{equation}
 \mathrm{Score}_d=\frac{100}{N_d}\sum_{i\in d}R_i,
 \qquad
 \mathrm{Tokens}_d=\frac1{N_d}\sum_{i\in d}T_i.
 \label{eq:app-task-aggregation}
\end{equation}
Overall Full260 means weight domains by task count. The fixed-40 panel
contains ten tasks per domain, and the 20-task online ablation contains
five per domain. All three use the same domain order: Code, Office,
Security, and Web.

The trajectory-level geometric analyses evaluate a different question from
end-to-end reward. Long-context understanding is likewise studied through
context-use analyses and dedicated benchmarks
\citep{liu2024lost,bai2024longbench,bai2025longbench}; here the primary online
outcome is the reward of a completed agent task.

\subsection{Token Accounting and Full260 Efficiency}
\label{app:token-accounting}

For each counted call $c$ in task $i$, let $I_{ic}$ and $O_{ic}$ denote
model input and output tokens, and let $T_i^{\mathrm{enc}}$ be embedding
usage. Combined tokens are
\begin{equation}
 T_i=\sum_c(I_{ic}+O_{ic})+T_i^{\mathrm{enc}}.
 \label{eq:app-combined-tokens}
\end{equation}
When usage is partitioned, $I_{ic}=I_{ic}^{\mathrm{uncached}}+
I_{ic}^{\mathrm{cached}}$ and $O_{ic}=O_{ic}^{\mathrm{normal}}+
O_{ic}^{\mathrm{reasoning}}$. The component counts are not added to the
input/output totals a second time. Model tokens are the four model components;
combined tokens additionally include the selector's embedding usage.

\begin{table}[!htbp]
\centering
\begin{threeparttable}
\caption{Full260 token composition, mean per task. K and M denote thousands
and millions of tokens. Values use the same reporting precision as the main text.}
\label{tab:full260-token-components}
\small
\setlength{\tabcolsep}{10pt}
\renewcommand{\arraystretch}{1.12}
\begin{tabular}{lrr}
\toprule
\textbf{Token category} & \textbf{Base agent} & \textbf{\gem{}} \\
\midrule
Uncached input & 48.70K & 201.48K \\
Cached input & 2.60M & 1.82M \\
Normal output & 12.40K & 15.68K \\
Reasoning output & 25.38K & 22.81K \\
\midrule
Model total & 2.69M & 2.06M \\
Embedding & 0 & 46.97K \\
\midrule
\rowcolor{gemrow}
\textbf{Combined} & 2.69M & \textbf{2.11M} \\
\bottomrule
\end{tabular}
\end{threeparttable}
\end{table}

Table~\ref{tab:full260-token-components} explains the net reduction in
Table~\ref{tab:main260}: combined usage decreases from 2.69M to 2.11M
per task after accounting for embedding cost. The reduced cached-input volume
more than offsets the additional embedding and uncached-input volume.
All categories are retained so that the total is interpretable.

Token accounting is distinct from the physical memory management studied in
LLM serving \citep{kwon2023efficient}. The combined-token measure here includes
both cached and uncached input and is used consistently for the efficiency
comparison; it is not presented as a direct wall-clock measurement.

\subsection{Reference Panel and Online Ablation Protocol}
\label{app:fixed40-reference}

The current fixed-40 method comparison is reported in
Table~\ref{tab:method-comparison}; the current 20-task online ablation is
reported in Table~\ref{tab:online-ablation}. This appendix specifies their
aggregation and method definitions rather than reproducing task-level records
or expanding the tables into individual task outcomes.

The fixed-40 panel and the Full260 evaluation retain the comparison scope
stated in the main text. The same-ID \gem{} slice and the reference methods
are not treated as a newly executed paired comparison. The online ablation
uses the same protected-set definitions, candidate policy, and valid-request
structure, with only the completion or protection component changed as
described in Appendix~\ref{app:component}.

\FloatBarrier
\end{document}

%% file: commands.tex
\usepackage[left=2.5cm,
right=2.5cm,
top=2.3cm,
bottom=2.3cm,
headheight=20pt,
headsep=10pt,
footskip=25pt,
letterpaper]{geometry}
\usepackage[utf8]{inputenc}
\usepackage[T1]{fontenc}
\usepackage[english]{babel}
\usepackage{amsmath,amsfonts,amssymb,amsthm,thmtools}
\usepackage{graphicx}
\usepackage{hyperref}
\usepackage{fancyhdr}
\usepackage[normalem]{ulem}
\usepackage{graphicx}
\usepackage{stfloats}
\usepackage{wrapfig}
\usepackage{epstopdf}
\usepackage{cleveref}
\usepackage{subfloat}
\usepackage{subcaption}
\usepackage{xspace}
\usepackage{enumitem}
\usepackage{listings}
\usepackage{titlesec}
\usepackage{etoolbox}
\usepackage{setspace}
\usepackage{changepage}
\usepackage{etoolbox}
\usepackage{multirow}
\usepackage{booktabs}
\usepackage{tabularx}
\usepackage{wrapfig}
\usepackage{svg}
\usepackage[percent]{overpic}
\usepackage[round]{natbib}
\usepackage[colorinlistoftodos, shadow,color=blue!30!white
]{todonotes}
\usepackage{xpatch}
\usepackage{siunitx}

\fancypagestyle{first}{\fancyfoot[R]{\small\thepage}}

\setlist[itemize]{leftmargin=1em,itemsep=0ex,topsep=0ex}
\titlespacing*{\paragraph}{0pt}{0ex plus .1ex}{1ex}
\titlespacing*{\section}{0ex}{2.3ex plus .3ex minus .0ex}{.6ex plus .3ex minus .2ex}
\titlespacing*{\subsection}{0ex}{1.5ex plus .3ex minus .5ex}{.4ex plus .2ex minus .1ex}
\titlespacing*{\subsubsection}{0ex}{1.2ex plus .3ex minus .3ex}{.3ex plus .2ex minus .2ex}

\xapptocmd\normalsize{%
\abovedisplayskip=.8em plus .2em minus .2em
\belowdisplayskip=.6em plus .1em minus .1em
\abovedisplayshortskip=.8em plus .2em minus .2em
\belowdisplayshortskip=.6em plus .1em minus .1em
}{}{}

\setcitestyle{numbers}
\renewcommand{\cite}[1]{\citep{#1}}

\definecolor{mydarkblue}{rgb}{0.0,0.15,0.7}
\hypersetup{%
colorlinks=true,
linkcolor=mydarkblue,
citecolor=mydarkblue,
filecolor=mydarkblue,
urlcolor=mydarkblue}

\makeatletter
  
  \def\AND{%
    \end{tabular}\hfil\linebreak[4]\hfil
    \begin{tabular}[t]{@{}c@{}}\bfseries\ignorespaces
  }

  \renewcommand{\maketitle}{%
    \begingroup
      {\centering\LARGE\@title\par}%
      \vskip 1em
      \centering
      \begin{tabular}[t]{@{}c@{}}\strut\@author\strut\end{tabular}%
      \vskip 0.3in minus 0.1in
    \endgroup
  }
\makeatother


%% file: authors.tex
\author{%
  \mbox{Mingxuan Wang\textsuperscript{1}}\quad \mbox{Fei Luo\textsuperscript{1}}\quad \mbox{Bo Wang\textsuperscript{1}}\quad \mbox{Guorun Yao\textsuperscript{1}}\quad \mbox{Yinglong Guo\textsuperscript{1}}\\[2pt]
  \mbox{Chao Ning\textsuperscript{1}}\quad \mbox{Hongyue Chen\textsuperscript{1}}\quad \mbox{Yanbiao Ma\textsuperscript{2,*}}\quad \mbox{Jungong Han\textsuperscript{3,*}}\\[4pt]
  \textsuperscript{1}TierFlow Team\\
  \textsuperscript{2}Gaoling School of Artificial Intelligence, Renmin University of China\\
  \textsuperscript{3}Tsinghua University\\[3pt]
  \textsuperscript{*}Corresponding authors.\quad \href{mailto:ybma1998@ruc.edu.cn}{\texttt{ybma1998@ruc.edu.cn}}%
}
\hypersetup{pdfauthor={Mingxuan Wang, Fei Luo, Bo Wang, Guorun Yao, Yinglong Guo, Chao Ning, Hongyue Chen, Yanbiao Ma, Jungong Han}}